\newtheorem{theorem}{Theorem}
\newtheorem{definition}{Definition} 
\title{Auction-Based Ex-Post-Payment Incentive Mechanism Design for Horizontal Federated Learning with Reputation and Contribution Measurement}
\author{
Jingwen Zhang\and
Yuezhou Wu\And
Rong Pan
}
\begin{document}

\maketitle

\begin{abstract}
Federated learning trains models across devices with distributed data, while protecting the privacy and obtaining a model similar to that of centralized ML. A large number of workers with data and computing power are the foundation of federated learning. However, the inevitable costs prevent self-interested workers from serving for free. Moreover, due to data isolation, task publishers lack effective methods to select, evaluate and pay reliable workers with high-quality data. Therefore, we design an auction-based incentive mechanism for horizontal federated learning with reputation and contribution measurement. By designing a reasonable method of measuring contribution, we establish the reputation of workers, which is easy to decline and difficult to improve. Through reverse auctions, workers bid for tasks, and the task publisher selects workers by combining reputation and bid price. With the budget constraint, winning workers are paid based on performance. We prove that our mechanism satisfies the individual rationality of the honest workers, budget feasibility, truthfulness, and computational efficiency. The experimental results show its effectiveness compared to the benchmarks. 
\end{abstract}

\section{Introduction}
Federated learning (FL) is a novel machine learning framework that uses distributed data and trains models across devices \cite{yang2019federated}. Everyone keeps the data locally, and trains a global model collaboratively without sharing data but only sharing the model's parameters. FL effectively breaks down data silos, makes full use of everyone's storage and computing capabilities, and satisfies privacy protection, data security, and government laws. FL is used in banking \cite{yang2019ffd}, healthcare \cite{dayan2021federated}, transportation \cite{liu2020privacy}, etc. Researches on FL mainly focus on privacy protection \cite{so2020byzantine}, heterogeneity \cite{wang2020optimizing}, communication efficiency \cite{chen2021communication}, etc., where workers voluntarily participate in FL. However, due to the inevitable costs, self-interested workers will not serve for free. Moreover, due to data isolation, the publisher cannot know the worker's data quality and reliability. It is impractical to require workers to report these truthfully. Therefore, task publishers lack the means to select as many high-quality workers as possible. In addition, even if workers are selected and rewarded, allocating rewards fairly remains a great challenge. This requires designing an effective method for measuring the contribution, quantifying the performance and reflecting the nature of workers.

We designed a reverse auction-based ex-post-payment incentive mechanism for horizontal federated learning with reputation and contribution measurement to motivate more workers and help publishers to select, evaluate, and pay workers rationally. By designing a reasonable contribution measurement method, we establish the reputation of workers. Reputation can indirectly reflect the quality and reliability of workers. Through the reverse auction, workers submit bid prices. Because of their pricing power, workers are motivated. The publisher selects workers with a high reputation and low bid price and pays them according to their performance. The contributions of the paper can be summarized as follows.
\begin{itemize}
    \item We propose a method of contribution measurement. 
    \item We have established a reputation system. The reputation is easy to decline and difficult to improve.
    \item Combining reputation and auction, we propose a mechanism to select workers and pay based on performance.
    \item We prove that our mechanism satisfies individual rationality of the honest workers, budget feasibility, truthfulness, and computational efficiency. The experimental resultse demonstrate its effectiveness.
\end{itemize}

The rest of the paper is organized as follows. Section~\ref{sec:related_work} introduces related work. Section~\ref{sec:system_model_problem_def} describes the system model and problem definition. Section~\ref{sec:mechanism_design} designs the mechanism in detail. Section~\ref{sec:theoretical_analysis} conducts theoretical analysis and Section~\ref{sec:experiment} shows the simulation results.

\section{Related Work}
\label{sec:related_work}
Shapley Value (SV) is used to measure the contribution of workers \cite{wei2020efficient}. SV is approximated by group testing~\cite{liu2021gtg} or sampling~\cite{wang2020principled}, which still takes a long time. The local model accuracy is indirectly or directly as the contribution~\cite{nishio2020estimation}, but it cannot express their mutual influence. The similarity or distance of model parameters is used for contribution measurement \cite{xu2021reputation}, which may make the reputation of high-quality workers low. Zhang \textit{et al.}~\shortcite{zhang2021blockchain} leverages contributions to update workers’ reputations. Kang \textit{et al.}~\shortcite{kang2019incentive} proposes a subjective logic model, combining local and recommended opinions to calculate reputation. Zhao \textit{et al.}~\shortcite{zhao2020mobile} sets an initial reputation. When submitting a useful model, reputation increases by 1, otherwise, decreases by 1. However, the granularity of reputation is large and lacks discrimination. Le \textit{et al.}~\shortcite{le2021incentive} uses reverse auction to help task publishers select workers to maximize social welfare. Zeng \textit{et al.}~\shortcite{zeng2020fmore} proposes a multi-directional auction mechanism that takes resource differences into account. Le \textit{et al.}~\shortcite{le2020auction} adoptes a random reverse auction mechanism to minimize social costs. Sarikaya and Ercetin~\shortcite{sarikaya2019motivating} model the interaction between workers and the publisher as a Stackelberg game to motivate and coordinate each worker. Kang \textit{et al.}~\shortcite{kang2019incentive} uses contract theory to design an incentive mechanism. Workers with different data quality choose a contract item to maximize their utilities. These papers determine workers' rewards before the task and do not consider that the workers might not work according to the claimed plan.

\section{System Model and Problem Definition}
\label{sec:system_model_problem_def}
\subsection{System Model}
A federated learning system consists of a task publisher and lots of workers. Workers are users or companies of smart devices with data. They are independent and non-colluding, wanting to participate in FL tasks to obtain profits. Their accumulated reputation $Re_i$ is public, but the data quality, quantity, and task cost $c_i$ are private. The task publisher with a budget of $B$ releases a task to recruit workers. His data serves as the test set and validation set. The interaction between the publisher and the workers is modeled as a reverse auction. The worker $i$ submits the sealed bid price $b_i$. Since workers are rational, and follow the strategy to maximize their utilities, the bid price $b_i$ may not be equal to the true cost $c_i$. The publisher obtains the accumulated reputation of each worker and selects the worker combining the bid price. Then the FL task starts. In each global round, the publisher checks the quality of each local model and measures each worker's contribution. Finally, the publisher calculates the internal reputation of the winning worker $i$, updates his accumulated reputation, and pays $p_i$. Figure~\ref{fig:system_model} describes the detailed process.
\begin{figure}[tb]
  \centering
  \centerline{\includegraphics[width=\linewidth]{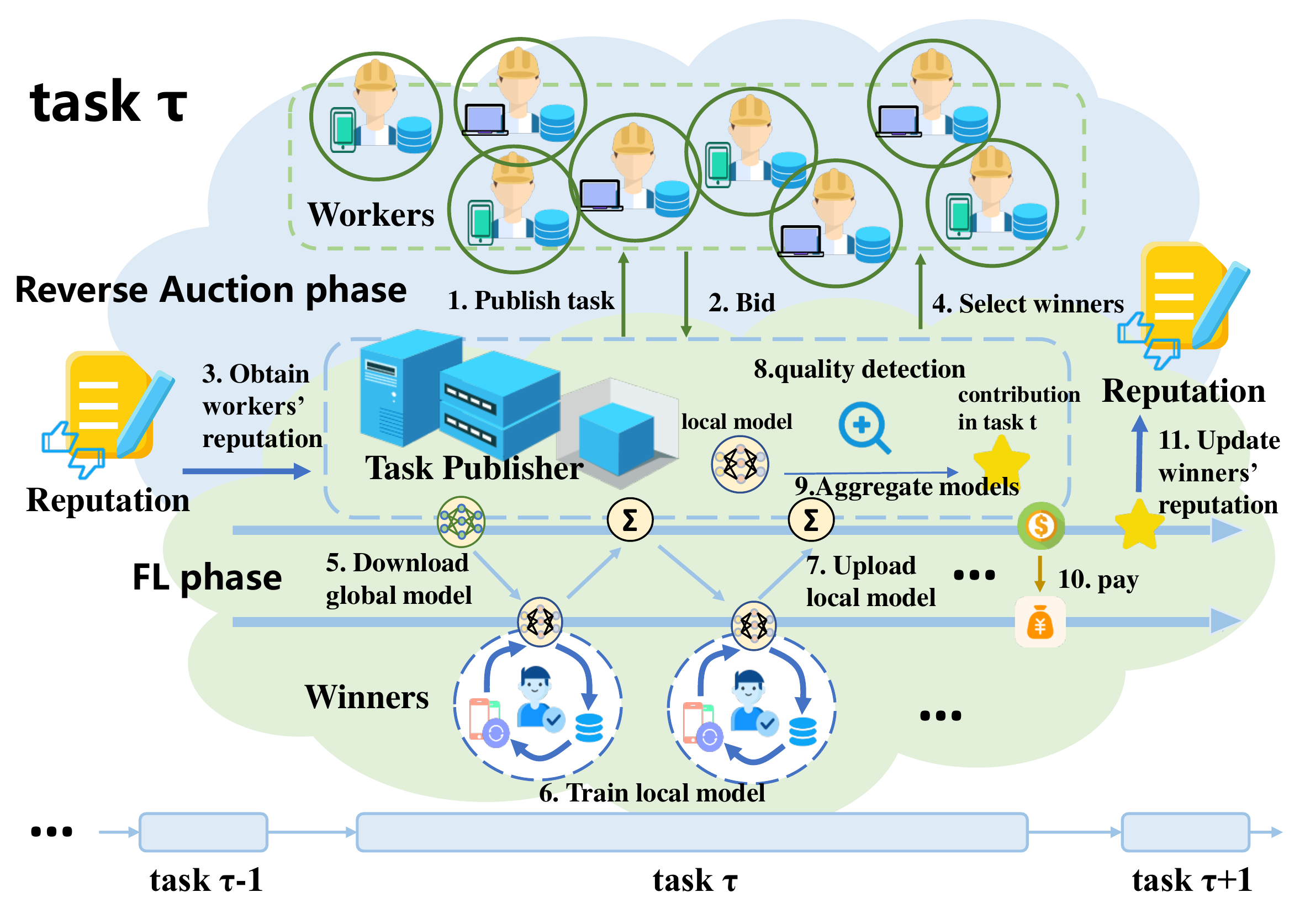}}
  \caption{System Model}
  \label{fig:system_model}
\end{figure}

\subsection{Problem Definition}
To select more high-quality workers to obtain a higher-quality model with a limited budget, it is necessary to design an incentive mechanism $\mathbb{M}(\mathbf{f},\mathbf{p}) $, including a selection mechanism $\mathbf{f}$ and a payment mechanism $\mathbf{p}$. Suppose the selected workers form the set $S$. The utility $u_i$ of the worker $i$ is
\begin{equation}
    u_i = 
    \left\{
        \begin{aligned}
        0 & , & i \notin S, \\
        p_i - c_i & , & i \in S.
        \end{aligned}
    \right.
\end{equation}
Since the budget is limited, $\sum_{i \in S}p_i \leq B$ need to be satisfied. The publisher wants to select as many high-quality workers as possible. Since the accumulated reputation indirectly reflects quality, we model the publisher’s utility as $\mathcal{U} = \sum_{i \in S} Re_i$. The goal of our mechanism is to maximize $\mathcal{U}$ by determining the winning worker set $S$ while satisfying the following four economic properties. \textit{Individual Rationality of the Honest Workers}: Honest workers have non-negative utility. \textit{Budget Feasibility}: The total payment cannot exceed the budget. \textit{Computational Efficiency}: The time complexity of the mechanism is polynomial. \textit{Truthfulness}: Reporting true cost can maximize the utility of the worker.

\section{Mechanism Design} 
\label{sec:mechanism_design}
\subsection{Contribution Measurement}
An intuitive method to measure the contribution is to evaluate the performance of the local model on the validation set, such as accuracy and loss \cite{lyu2020collaborative}. However, it is not fair, because the prediction difficulty of different validation samples is different. The prediction difficulty of a certain validation sample is low when many workers predict it correctly. Even if one of the workers is removed, the global model is still very likely to predict it correctly. The value of this sample being correctly predicted is relatively small. On the contrary, when a sample is difficult to predict, the workers who can correctly predict it are more important in this sample. The value of this sample being correctly predicted is relatively large. Therefore, the sample that is more difficult to predict is given a larger weight, and the contribution of workers is according to their performance on the weighted samples. Directly leveraging accuracy as worker contribution is equivalent to assigning equal weight to each validation sample, which may lack fairness. Our contribution measurement is divided into two steps. First, determine the weight of each sample, and then evaluate the worker's performance on the weighted sample. The weight of the validation sample is determined by the predictive ability of all workers, and the predictive ability is determined by the probability that the predicted result is the true label. The probability that the worker $i$ in the worker set $U$ correctly predicts the sample $j$ in the validation set $D$ is $P_{i,j}$, then the weight of the sample $j$ is 
\begin{equation}
    w_j = \frac{\sum_{i \in U}-\ln P_{i,j}}{\sum_{i\in U}\sum_{j \in D}- \ln P_{i,j}}.
\end{equation}
Taking $P_{i,j}$ as the performance of worker $i$ on sample $j$, the contribution of worker $i$ is computed by
\begin{equation}
    contrib_i = \sum_{j \in D} P_{i,j}w_j.
\end{equation}
The above is the contribution of worker $i$ in a global round. Denote the contribution of the worker $i$ in the round $t$ as $contrib_i^t$, and standardized it by $contrib_i^t = \frac{contrib_i^t}{\max_{k \in U}(contrib_k^t)}$. The contribution $contrib_i$ of worker $i$ to the task is calculated by $contrib_i = \frac{\sum_{t \in \mathcal{T}} contrib_i^t}{|\mathcal{T}|}$, where $\mathcal{T}$ is all global rounds that worker $i$ participated in.

\subsection{Reputation Modeling}
Reputation is a rating of the quality and reliability of workers, allowing publishers to select high-quality workers. First, we model the reputation of a worker in a certain task  as internal reputation and then integrate the reputation in all historical tasks as the accumulated reputation. The internal reputation represents the performance of the worker in the current task, which is related to his contribution and the quality detection of his local model. We use the method proposed by Zhang~\shortcite{zhang2021incentive} to check the quality of the local model. The loss of the global model in the validation set when the worker $i$ participates and does not participate in the model aggregation is $l$ and $l_{-i}$ respectively. Set a predefined threshold $\delta = -0.005$, if $\Delta l_i = l_{-i}-l \geq \delta$, worker $i$ passes the detection, and only the passed local model can participate in the model aggregation. To model the internal reputation, we introduce the trustworthiness $trust_i$ of worker $i$, which indicates the degree of acceptance of his local model. $trust_i$ is the output of the Gompertz function, and the input is shown in Eq.~(\ref{equ:trust_degree}), where $n_i^{pass}$ and $n_i^{fail}$ are the number of times that the worker $i$ passed and failed the detection, respectively, and $\theta \in (0, 0.5)$ is a predefined parameter, meaning that more attention is paid to failing the detection. We set $\theta = 0.4$.
\begin{equation}
    \label{equ:trust_degree}
    x_i = \frac{\theta \cdot n_i^{pass} - (1-\theta)\cdot n_i^{fail}}{\theta \cdot n_i^{pass} + (1 - \theta) \cdot n_i^{fail}}.
\end{equation}
The Gompertz function is a growth curve, suitable for modeling the trust of individual interactions \cite{zhang2021incentive}. The Gompertz function is described as $y = a\exp(b\exp(c\cdot x))$, where $a$, $b$, and $c$ are parameters, $x$ is input, and $y$ is output. We set $a = 1$, $b = -1$, $c = -5.5$. The trustworthiness of worker $i$ is $trust_i = \exp(-\exp(-5.5x_i))$. Then, combine the $contrib_i \in [0, 1]$ and $trust_i \in [0, 1]$ of the worker $i$ in the task $\tau$ to obtain his internal reputation $re_i^\tau \in [0, 1]$. Omit the superscript $\tau$ without confusion, as
\begin{equation}
    re_i = contrib_i \cdot trust_i
\end{equation}

The accumulated reputation $Re_i^\tau$ of the worker $i$ is derived from the internal reputation of all historical tasks. The newer the internal reputation, the more it reflects the nature of the worker. Thus using the moving average method to model the accumulated reputation of workers $i$, as shown in the Eq.~(\ref{equ:accumulated_reputation}), where $\alpha$ is the attenuation coefficient and the weight of the latest internal reputation.
\begin{equation}
    \label{equ:accumulated_reputation}
    Re_i^\tau = \alpha \cdot re_i^\tau + (1 - \alpha) \cdot Re_i^{\tau - 1}.
\end{equation}
When workers perform good tasks continuously, their accumulated reputation should gradually increase slightly. Once a bad task is done, their accumulated reputation should immediately drop significantly. Therefore, we consider the number of consecutive good and bad tasks and the latest internal reputation as factors to dynamically determine the attenuation coefficient $\alpha$. The related concepts are defined below. 
\begin{definition}[Honest worker, Dishonest worker]
If a winning worker's internal reputation in the current task is not less than his accumulated reputation beforehand, then he is called an honest worker, otherwise, a dishonest worker.
\end{definition}
\begin{definition}[Number of consecutive good tasks]
From the initial task or the last bad task to the present, the number of tasks where worker $i$ continuously performs honestly is called the number of consecutive good tasks $n_i^+$, which satisfies
\begin{equation}
    n_i^+ = 
    \left\{
        \begin{aligned}
        0 & , & re_i^\tau < Re_i^{\tau-1}, \\
        n_i^+ + 1 & , & re_i^\tau \geq Re_i^{\tau-1}.
        \end{aligned}
\right.
\end{equation}
\end{definition}
\begin{definition}[Number of consecutive bad tasks]
From the initial task or the last good task to the present, the number of tasks where worker $i$  continuously performs dishonestly is called the number of consecutive bad tasks $n_i^-$, which is
\begin{equation}
    n_i^- = 
    \left\{
        \begin{aligned}
        0 & , & re_i^\tau \geq Re_i^{\tau-1}, \\
        n_i^- + 1 & , & re_i^\tau < Re_i^{\tau-1}.
        \end{aligned}
    \right.
\end{equation}
\end{definition}

Honest performance will slightly increase accumulated reputation, while dishonest performance will greatly reduce it. That is, the smaller $re_i^\tau$, the greater $\alpha$. We introduce $h(re^\tau)$ to represent the influence of $re^\tau$ on $\alpha$. $h(re^\tau)$ is the decreasing function of $re^\tau \in [0, 1]$ and ensure that when $Re_i^{\tau-1} = Re_j^{\tau-1}$ and ignore the number of consecutive good and bad tasks, if $re_i^\tau> re_j^\tau$ , then $Re_i^\tau> Re_j^\tau$. To determine the form of $h(re^\tau)$, we make the difference between $Re_i^\tau$ and $Re_j^\tau$, as shown in Eq.~(\ref{equ:reputation_diff}). Since $Re_i^{\tau-1} = Re_j^{\tau-1}$, the two are abbreviated as $Re^{\tau-1}$.
\begin{equation}
    \label{equ:reputation_diff}
    \resizebox{1.0\linewidth}{!}{$
    \displaystyle
    Re_i^\tau - Re_j^\tau = (h(re_i^\tau)re_i^\tau - h(re_j^\tau)re_j^\tau) - (h(re_i^\tau) - h(re_j^\tau))Re^\tau.
    $}
\end{equation}
To satisfy $Re_i^\tau> Re_j^\tau$, only $h(re_i^\tau)re_i^\tau-h(re_j^\tau)re_j^\tau> 0$ is required, meaning that $q (re^\tau) = h(re^\tau)re^\tau$ is an increasing function of $re^\tau$. Based on the above analysis, $h(re^\tau)$ is
\begin{equation}
    h(re^\tau) = - \frac{19}{10\pi} \arctan(\frac{10re^\tau}{\pi}) + 1.
\end{equation}
With the same other factors, the more the number of consecutive good tasks, the higher the accumulated reputation. That is, when $re_i^\tau \geq Re_i^{\tau-1}$, the larger $n_i^+$, the smaller the weight of $Re_i^{\tau-1}$. We introduce $g(n^+)$ to represent the influence of $n_i^+$ on $(1-\alpha)$. $g(n^+)$ is the decreasing function of $n^+$. With the same other factors, the more the number of consecutive bad tasks, the lower the accumulated reputation. That is, when $re_i^\tau < Re_i^{\tau-1}$, the larger $n_i^-$, the smaller the weight of $Re_i^{\tau-1}$. For simplicity, we also leverage $g(n^-)$ to represent the influence of $n^-$ on $(1-\alpha)$. $g(n)$ satisfies $g(n)> 0$, and has its upper and lower bounds. When $n^+ = 0$ or $n^- = 0$, it does not affect $(1-\alpha)$, so $g(0) = 1$ must be satisfied. And when $n^+$ or $n^-$ grows, $g(n)$ tends to a certain value but not 0. We set $g(n)$ to be
\begin{equation}
    g(n) = \frac{2(1 - \beta_2)}{1 + \exp(\beta_1 n)} + \beta_2,
\end{equation}
where $\beta_1$ and $\beta_2$ are parameters. The larger the $\beta_1$, the faster the decline of $g(n)$. $\beta_2$ controls the asymptotes of $g(n)$. We set $\beta_1 = \frac{1}{2}$, $\beta_2 = \frac{1}{4}$. Combine the effects of $n^+$ and $n^-$ on $(1-\alpha)$ as $f(n^+, n^-) = g(n^+)g(n^-)$. Eqs.~(\ref{equ:adjust_a}) and (\ref{equ:final_re}) model the accumulated reputation of workers $i$.
\begin{align}
    \label{equ:adjust_a}
    \alpha_i = \frac{h(re_i^\tau)}{h(re_i^\tau) + (1 - h(re_i^\tau))f(n_i^+, n_i^-)}. \\
    \label{equ:final_re}
    Re_i^\tau = \alpha_i \cdot re_i^\tau + (1 - \alpha_i) \cdot Re_i^{\tau - 1}.
\end{align}

\texttt{FedAvg} \cite{mcmahan2016federated} is a method of aggregating models according to the amount of data. However, workers are likely to lie about the amount, which leads to a poor global model. If local models are aggregated by direct averaging, better local models will not receive enough attention, which slows down global convergence. Therefore, when aggregating, we assign different weights to local models according to performance. Define the contribution score of worker $i$ as $contrib\_score_i^t = \max(0, contrib_i^t)$. Define the quality detection score of worker $i$ as $quality\_score_i^t = \frac{s_0 + s_i}{\sum_{k}(s_0 + s_k)}$, where $s_0$ is a constant, usually $s_0 = 1$, and $s_i$ is defined as $s_i = \frac{\Delta l_i - min_j(\Delta l_j)}{\sum_{k}(\Delta l_k - min_j(\Delta l_j))}$. Finally, the aggregate weight of the worker $i$ is $\omega_i = \frac{contrib\_score_i^t \cdot quality\_score_i^t}{\sum_j contrib\_score_j^t \cdot quality\_score_i^t}$.

\subsection{Workers Selection and Payment}
Existing auction-based researches determines the final reward of workers before the task. Since reward has nothing to do with actual performance, workers may not work according to the claimed plan, which will affect the global model.To solve the above challenges, we designed an ex-post payment incentive mechanism $\mathbb{M}(\mathbf{f}, \mathbf{p})$ based on reputation and proportional share reverse auction \cite{singer2010budget}. Our selection mechanism $\mathbf{f}$ is consistent with the proportional share mechanism, while our payment mechanism $\mathbf{p}$ is improved based on it so that the reward is determined according to the worker’s performance. That is the so-called \textit{ex-post payment}. Algorithm~\ref{alg:offline_auction} describes the process of the mechanism.

\begin{algorithm}[tb]
    \caption{Workers Selection and Payment}
    \label{alg:offline_auction}
    \textbf{Input}: Budget $B$; Worker Set $U$; 
    \begin{algorithmic}[1]  
        \label{alg:offline_auction:sort_and_choose:start}
        \STATE Sort the workers in $U$ so that $\frac{b_1}{Re_1} \leq ... \leq \frac{b_{|U|}}{Re_{|U|}}$;
        \label{alg:offline_auction:sort}
        \STATE $i = 1$; $S = \phi$;
        \WHILE {$\frac{b_i}{Re_i} \leq \frac{B}{Re_i + \sum_{j \in S} Re_j} $}
        \label{alg:offline_auction:choose_worker:start}
            \STATE $S = S \cup \{i\}$; $i = i + 1$;
        \ENDWHILE
        \label{alg:offline_auction:choose_worker:end}
        \label{alg:offline_auction:sort_and_choose:end}
        \FOR {each worker $i \in U\setminus S$}
        \label{alg:offline_auction:upper_payment:start}
            \STATE $p_i = 0$;
        \ENDFOR
        \STATE $k = i - 1$; $\rho^{\ast} = \min(\frac{b_{k+1}}{Re_{k+1}}, \frac{B}{\sum_{j \in S}Re_j})$;
        \FOR {each worker $i \in S$}
            \STATE $p_{i}^{up} = Re_i \cdot \rho^{\ast} $; 
        \ENDFOR
        \label{alg:offline_auction:upper_payment:end}
        \STATE \textit{/*After the task:*/}
        \FOR {each worker $i \in S$}
        \label{alg:offline_auction:final_payment:start}
            \STATE $p_i^{\prime} = \max(\frac{B\cdot re_i}{\sum_{j \in S} re_j}, \rho^{\ast}\cdot re_i)$; $p_{i} = \min(p_i^{up}, p_i^{\prime})$; 
        \ENDFOR
        \label{alg:offline_auction:final_payment:end}
    \end{algorithmic}
\end{algorithm}

The publisher needs to select more high-quality workers to get a high-precision model. To select more workers, the publisher tend to select workers with lower bid prices. To select higher-quality workers, he tends to select workers with higher accumulated reputations. Balancing the bid price and accumulated reputation, we define the worker’s unit accumulated reputation bid price as his cost density $\rho_i = \frac{b_i}{Re_i}$. We sort all workers in non-descending order of their cost density. According to the proportional share allocation rule, we find the last worker $k$ in the sequence that satisfies $\rho_k \leq B / (Re_k + \sum_{i = 1}^{k-1}Re_i)$ from front to back. The first $k$ workers in the sequence form the winning worker set $S$. To determine the reward, we define the payment density threshold $\rho^{\ast}$ as
\begin{equation}
    \rho^{\ast} = \min(\frac{B}{\sum_{i \in S}Re_i}, \frac{b_{k+1}}{Re_{k+1}}).
\end{equation}
The losing worker is paid 0. The winning worker has a reward upper bound which ensure truthfulness and budget feasibility. The reward upper bound of the winning worker $i \in S$ is
\begin{equation}
    p_i^{up} = Re_i \cdot \rho^{\ast}.
\end{equation}
After the task, the internal reputation $re_i$ of each winning worker $i$ is evaluated and his temporary reward is $p_i^{\prime} = re_i \cdot \max(\frac{B}{\sum_{j \in S}re_j}, \rho^{\ast})$. His final reward is
\begin{equation}
    p_i = \min(p_i^{up}, p_i^{\prime}).
\end{equation}

\section{Theoretical Analysis}
\label{sec:theoretical_analysis}
\begin{theorem}
Our mechanism satisfies the individual rationality of the honest workers.
\end{theorem}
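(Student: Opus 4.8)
The plan is to return to the definition of utility: $u_i = 0$ when $i\notin S$ and $u_i = p_i - c_i$ when $i\in S$, so the losing case is immediate and it suffices to show $p_i \geq c_i$ for every winning worker who performs honestly (and who, bidding truthfully as justified by the truthfulness property proved later, has $b_i = c_i$). Since $p_i = \min(p_i^{up}, p_i^{\prime})$, the core task is to bound below, by $c_i$, whichever of the two candidate payments is actually selected.

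First I would read off the consequences of the selection loop. The workers are sorted with $\rho_1\le\cdots\le\rho_{|U|}$, $\rho_i = b_i/Re_i$, and the winners are $S=\{1,\dots,k\}$ where $k$ is the last index satisfying $\rho_k \le B/(Re_k+\sum_{j=1}^{k-1}Re_j)$; note $Re_k+\sum_{j=1}^{k-1}Re_j = \sum_{j\in S}Re_j$. Monotonicity of the sorted sequence then gives, for every $i\in S$, both $\rho_i\le\rho_k\le B/\sum_{j\in S}Re_j$ and $\rho_i\le\rho_k\le\rho_{k+1}=b_{k+1}/Re_{k+1}$, hence $\rho_i\le\rho^{\ast}$. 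Multiplying by $Re_i>0$ gives $c_i=b_i=\rho_iRe_i\le\rho^{\ast}Re_i=p_i^{up}$, so the reward upper bound already covers the cost.

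Next I would invoke the honesty hypothesis: by the definition of an honest worker, $re_i^{\tau}\ge Re_i^{\tau-1}$, i.e.\ $re_i\ge Re_i$ for the accumulated reputation $Re_i$ used in the auction. Then $p_i^{\prime}=re_i\cdot\max(B/\sum_{j\in S}re_j,\ \rho^{\ast})\ge \rho^{\ast}re_i\ge \rho^{\ast}Re_i=p_i^{up}$, so $p_i=\min(p_i^{up},p_i^{\prime})=p_i^{up}$. Combining with the previous step, $u_i=p_i-c_i=p_i^{up}-c_i=\rho^{\ast}Re_i-b_i\ge 0$, which is individual rationality for honest winners; together with the trivial losing case this completes the argument.

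The point requiring care is that the result genuinely uses both ingredients. The precise while-loop termination condition is what makes $\rho^{\ast}$ an upper bound on every winner's cost density, hence $b_i\le p_i^{up}$; and the honesty assumption $re_i\ge Re_i$ is exactly what prevents the ex-post payment $p_i^{\prime}$ from dropping below $p_i^{up}$ --- for a dishonest worker one could have $p_i=p_i^{\prime}<p_i^{up}$ and then possibly $p_i<c_i$, so the restriction to honest workers cannot be dropped. I would also note that IR here is asserted for truthful reports (consistent with the truthfulness theorem), so replacing $b_i$ by $c_i$ is legitimate; if arbitrary bids were allowed, only truthful or over-bidding winners would be guaranteed individual rationality.
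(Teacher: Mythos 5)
Your proof is correct and follows essentially the same route as the paper's: you show that honesty ($re_i \ge Re_i$) forces $p_i' \ge p_i^{up}$ and hence $p_i = p_i^{up} = \rho^{\ast}Re_i$, and that winning forces $b_i \le \rho^{\ast}Re_i$, giving $u_i = p_i - c_i \ge 0$ under truthful bidding. You are somewhat more explicit than the paper about the sorted-order/while-loop reasoning behind $\rho_i \le \rho^{\ast}$, the implicit identification $b_i = c_i$, and why honesty cannot be dropped, but the substance is identical.
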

\begin{proof}
Worker $i$ is honest, meaning that he wins and $re_i \geq Re_i$. $p_i^{\prime} = \max(\rho^{\ast}re_i, \frac{B\cdot re_i}{\sum_{j \in S}Re_j}) \geq \rho^{\ast }re_i \geq \rho^{\ast}Re_i = p_i^{up}$. Thus, $p_i = \min(p_i^{\prime}, p_i^{up}) = p_i^{up}$. Worker $i$ wins, so $\frac{b_i}{Re_i} \leq \frac{b_{k+1}}{Re_{k+1}}$ and $\frac{b_i}{Re_i} \leq \frac{B}{\sum_{j \in S}Re_j}$. Thus $b_i \leq \min(\frac{b_{k+1}\cdot Re_i}{Re_{k+1}}, \frac{B\cdot Re_i}{\sum_{j \in S}Re_j}) = p_i^{up} = p_i$.
\end{proof}

\begin{theorem}
Our mechanism satisfies budget feasibility.
\end{theorem}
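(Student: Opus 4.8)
The plan is to bound the total payment directly, exploiting that every winning worker's reward is capped by its upper bound $p_i^{up}$, which is itself controlled by the payment density threshold $\rho^{\ast}$. First I would observe that every losing worker $i \in U \setminus S$ receives $p_i = 0$, so the total payment equals $\sum_{i \in S} p_i$; if $S = \phi$ the claim is trivial, so assume $S \neq \phi$, in which case $\sum_{j \in S} Re_j > 0$ since all reputations are positive.

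Next, for each winning worker $i \in S$, the final reward satisfies $p_i = \min(p_i^{up}, p_i^{\prime}) \le p_i^{up} = Re_i \cdot \rho^{\ast}$. By the definition of the payment density threshold, $\rho^{\ast} = \min\!\left(\frac{B}{\sum_{j \in S} Re_j}, \frac{b_{k+1}}{Re_{k+1}}\right) \le \frac{B}{\sum_{j \in S} Re_j}$, and therefore $p_i \le Re_i \cdot \frac{B}{\sum_{j \in S} Re_j}$ for every $i \in S$.

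Summing this inequality over $i \in S$ then gives $\sum_{i \in S} p_i \le \frac{B}{\sum_{j \in S} Re_j} \sum_{i \in S} Re_i = B$, which is exactly budget feasibility.

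I do not expect a genuine obstacle here. The only point requiring care is to route the argument through the $\min$ defining $p_i$, so that the post-task temporary reward $p_i^{\prime}$ (which can exceed the budget-respecting bound when $\sum_{j\in S} re_j$ is small) never actually enters the estimate; the cap $p_i^{up}$ absorbs that case. Confirming $\sum_{j \in S} Re_j \neq 0$ for $S \neq \phi$ handles the only degenerate situation, and the remainder is a one-line summation.
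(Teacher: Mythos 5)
Your proposal is correct and follows essentially the same route as the paper: bound each $p_i$ by $p_i^{up} = Re_i\cdot\rho^{\ast}$, use $\rho^{\ast}\le B/\sum_{j\in S}Re_j$, and sum over $S$ to obtain $B$. The extra remarks about losing workers and the $S=\phi$ case are harmless additions, not a different argument.
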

\begin{proof}
Total reward is $\sum_{i \in S}p_i \leq \sum_{i \in S} p_i^{up} =  \sum_{i \in S}\min(\frac{b_{k+1}\cdot Re_i}{Re_ {k+1}}, \frac{B\cdot Re_i}{\sum_{j \in S}Re_j}) \leq \frac{\sum_{i \in S} B \cdot Re_i}{\sum_{j \in S}Re_j} = B$.
\end{proof}

\begin{theorem}
Our mechanism satisfies computational efficiency.
\end{theorem}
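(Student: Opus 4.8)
The plan is to walk through Algorithm~\ref{alg:offline_auction} together with the contribution‑measurement and reputation‑update procedures that the mechanism $\mathbb{M}(\mathbf{f},\mathbf{p})$ invokes, and to bound the running time of each block by a polynomial in the natural size parameters: the number of workers $|U|$, the size $|D|$ of the validation set, and the number of global rounds $|\mathcal{T}|$ (treating one round of model aggregation and validation‑set evaluation as a single operation of cost $C$).

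First I would handle the auction itself. Sorting the workers by cost density $\rho_i = b_i/Re_i$ in line~\ref{alg:offline_auction:sort} takes $O(|U|\log|U|)$. The selection loop (lines~\ref{alg:offline_auction:choose_worker:start}--\ref{alg:offline_auction:choose_worker:end}) runs at most $|U|$ times; if the running total $\sum_{j\in S}Re_j$ is maintained incrementally, each iteration costs $O(1)$, so the loop is $O(|U|)$. Zeroing the payments of the losers, evaluating $\rho^{\ast}$, and the two payment loops (the upper bounds $p_i^{up}$ and the final payments $p_i$) each touch every worker at most once and do $O(1)$ arithmetic per worker, hence $O(|U|)$ each. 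So the selection‑and‑payment stage runs in $O(|U|\log|U|)$ time, ignoring the cost of the inputs it reads ($Re_i$) and of the quantities produced after the task ($re_i$).

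Second I would account for those quantities. In each global round the contribution measurement evaluates $P_{i,j}$ for every $i\in U$ and $j\in D$ — $O(|U||D|)$ predictions — and then forms the sample weights $w_j$ and the contributions $contrib_i$ by the formulas in Section~\ref{sec:mechanism_design}, another $O(|U||D|)$ arithmetic operations; the per‑round standardization and the averaging over $\mathcal{T}$ add only $O(|U|\,|\mathcal{T}|)$. The quality detection computes the losses $l$ and $l_{-i}$, i.e.\ one aggregation/evaluation and one leave‑one‑out aggregation/evaluation per worker, for a per‑round cost $O(|U|\,C)$, and the aggregation weights $\omega_i$ cost $O(|U|)$ per round. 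Over $|\mathcal{T}|$ rounds this is $O(|\mathcal{T}|\,|U|\,(|D|+C))$. Finally the reputation update per worker — the Gompertz trustworthiness $trust_i$ from Eq.~(\ref{equ:trust_degree}), the internal reputation $re_i$, the counters $n_i^+$ and $n_i^-$, the functions $h$, $g$, $f$, and the moving averages in Eqs.~(\ref{equ:adjust_a})--(\ref{equ:final_re}) — is $O(1)$, so $O(|U|)$ in total. Summing all blocks yields a running time polynomial in $|U|$, $|D|$ and $|\mathcal{T}|$, which is the claim.

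I expect no genuine obstacle here; the work is essentially bookkeeping. The two points I would be careful to state cleanly are (i) that the partial sum $\sum_{j\in S}Re_j$ in the while‑condition of line~\ref{alg:offline_auction:choose_worker:start} must be maintained incrementally, since a naive recomputation makes that loop $O(|U|^2)$ — still polynomial, but worth doing properly — and (ii) that the leave‑one‑out quality detection, although it calls the aggregation routine $|U|$ times per round, contributes only a polynomial factor once $C$ is taken as a unit operation. With these observations the polynomial bound follows immediately.
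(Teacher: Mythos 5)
Your proof is correct and follows essentially the same line-by-line cost-accounting approach as the paper, which decomposes Algorithm~\ref{alg:offline_auction} into sorting ($O(|U|\log|U|)$), the selection loop, and the payment loops to conclude $O(|U|\log|U|)$ overall. You go somewhat further than the paper by also bounding the contribution-measurement, quality-detection, and reputation-update costs polynomially in $|U|$, $|D|$, and $|\mathcal{T}|$; the paper's proof confines itself to Algorithm~\ref{alg:offline_auction} and leaves those parts implicit, so your added bookkeeping is a harmless (arguably more complete) extension rather than a different argument.
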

\begin{proof}
In Algorithm~\ref{alg:offline_auction}, the time complexity of sorting the workers in $U$ (line \ref{alg:offline_auction:sort}) is $O(n\log_2 |U|)$, that of selecting workers (line \ref{alg:offline_auction:choose_worker:start}-\ref{alg:offline_auction:choose_worker:end}) is $O(|S|)$, that of computing the reward of the losing worker and the reward upper bound of the winning worker (line \ref{alg:offline_auction:upper_payment:start}-\ref{alg:offline_auction:upper_payment:end}) is $O(|U|)$ and that of calculating the final reward (line \ref{alg:offline_auction:final_payment:start}-\ref{alg:offline_auction:final_payment:end}) is $O(|S|)$. Therefore, the time complexity of Algorithm~\ref{alg:offline_auction} is $O(|U| \log_2 |U|)$.
\end{proof}

\begin{theorem}
Our mechanism satisfies truthfulness.
\end{theorem}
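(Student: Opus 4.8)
The plan is to reduce truthfulness to two structural facts about the selection rule $\mathbf{f}$---which is precisely the proportional share allocation rule of lines~\ref{alg:offline_auction:sort_and_choose:start}--\ref{alg:offline_auction:sort_and_choose:end}---together with one observation about the ex-post payment. First I would fix the reports and reputations of all other workers and show that there is a threshold $\bar b_i$, depending only on those other workers, such that worker $i$ is selected if and only if $b_i\le\bar b_i$; moreover, on the whole range $b_i\le\bar b_i$ the selected set $S$, the index $k$, the worker $k{+}1$, and hence the density $\rho^{\ast}=\min\!\big(B/\sum_{j\in S}Re_j,\ b_{k+1}/Re_{k+1}\big)$ are all independent of $b_i$. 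The ``only if'' direction and the identity $\bar b_i=Re_i\cdot\rho^{\ast}=p_i^{up}$ are read off from the two inequalities already used in the proof of Theorem~1 ($b_i/Re_i\le b_{k+1}/Re_{k+1}$ and $b_i/Re_i\le B/\sum_{j\in S}Re_j$), while the invariance and the ``if'' direction follow because changing $b_i$ within this range only shifts $i$'s position among the sorted cost densities without changing which workers fall in the greedy prefix.

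Next I would observe that the ex-post modification does not reintroduce any dependence on $b_i$. The internal reputation $re_i$ is computed from the worker's realized contribution and local-model quality and is unaffected by $b_i$, so the temporary reward $p_i^{\prime}=re_i\cdot\max\!\big(B/\sum_{j\in S}re_j,\ \rho^{\ast}\big)$ depends only on realized performance, on $S$, and on $\rho^{\ast}$, all fixed once $i$ is selected. Hence the final reward $p_i=\min(p_i^{up},p_i^{\prime})$ of a selected worker is a constant $\bar p_i$ independent of $b_i$, with $\bar p_i\le\bar b_i$, and by Theorem~1 with $\bar p_i=\bar b_i$ whenever the worker performs honestly.

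It then remains to run the standard single-parameter case analysis on the report $b_i$ against the true cost $c_i$. If $c_i\le\bar b_i$, reporting $c_i$ makes $i$ a winner with utility $\bar p_i-c_i=\bar b_i-c_i\ge 0$ for an honest worker; any other winning report gives the same utility and any losing report gives $0\le\bar b_i-c_i$, so $b_i=c_i$ is optimal. If $c_i>\bar b_i$, reporting $c_i$ makes $i$ a loser with utility $0$, while any report that wins must satisfy $b_i\le\bar b_i<c_i$ and yields utility $\bar p_i-c_i\le\bar b_i-c_i<0$; again $b_i=c_i$ is optimal. In both cases truthful reporting maximizes $u_i$.

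I expect the main obstacle to be the invariance assertion in the first step---that a winning worker cannot move $S$ or $\rho^{\ast}$ by changing its own report---since this is the heart of proportional-share truthfulness and requires careful bookkeeping of how a change in $b_i$ permutes the sorted sequence of cost densities and the prefix reputation sums $\sum Re_j$ that gate the greedy loop of lines~\ref{alg:offline_auction:choose_worker:start}--\ref{alg:offline_auction:choose_worker:end}. A secondary point worth stating explicitly is that the equality $\bar p_i=\bar b_i$ (individual rationality) holds only under honest performance, so the theorem is most naturally read as: conditional on a selected worker performing the task honestly, reporting the true cost is a weakly dominant strategy.
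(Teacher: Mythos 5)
Your proposal is correct in substance but organizes the argument differently from the paper. The paper proves truthfulness by an exhaustive case analysis: it splits into $b_i>c_i$ and $b_i<c_i$, and within each scenario enumerates the four win/lose combinations, ruling two out as contradictory with the sorting by cost density, and handling the dangerous underbidding case (``wins with $b_i$ but loses with $c_i$'') by the direct chain $u(b_i,b_{-i})\le \rho^{\ast}Re_i-c_i\le \frac{b_{k'+1}}{Re_{k'+1}}Re_i-c_i\le \frac{c_i}{Re_i}Re_i-c_i=0$. You instead package the same facts into the standard single-parameter (Myerson-style) structure: a monotone allocation with a critical bid $\bar b_i=Re_i\rho^{\ast}$, bid-independence of the winner's payment, and a two-case comparison of $c_i$ against $\bar b_i$. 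The two routes rest on exactly the same unproven core: the paper's Case~1 (``the winning workers set remains unchanged, resulting in the same payment density'') is precisely the invariance of $S$ and $\rho^{\ast}$ under a winner's bid change that you flag as your main obstacle, and neither you nor the paper actually carries out the bookkeeping over the resorted prefix sums (the paper implicitly defers to Singer's proportional share analysis). Your version buys modularity and makes explicit something the paper only states inside Case~2, namely that the argument goes through only conditional on honest performance ($re_i\ge Re_i$), since for a dishonest winner $\bar p_i<\bar b_i$ and a worker with $\bar p_i<c_i\le\bar b_i$ would strictly prefer to overbid and lose; so the theorem as stated really is ``truthfulness for honest workers,'' which your closing remark correctly identifies. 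To turn your sketch into a complete proof you would still need to verify the prefix-sum invariance lemma, or fall back on the paper's direct inequality for the underbidding case, which avoids needing the full strength of that lemma.
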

\begin{proof}
Assume that regardless of the bid price, $re_i$ is the same. Consider the scenario where $b_i>c_i$ first. \\
\textbf{Case 1: worker $i$ wins with both $b_i$ and $c_i$}. The winning workers set remains unchanged, resulting in the same payment density. Since $re_i$ is the same, $u(c_i, b_{-i}) = u(b_i, b_{-i})$. \\
\textbf{Case 2: worker $i$ wins with $c_i$ but loses with $b_i$}. Submitting $b_i$ leads to $u(b_i, b_{-i}) = 0$. As long as $re_i \geq Re_i$, submitting $c_i$ leads to $u(c_i, b_{-i}) = p_i-c_i = \min(\max(\frac{B \cdot re_i}{\sum_{j \in S}re_j}, \rho^{\ast} \cdot re_i), \rho^{\ast} \cdot Re_i)-c_i = \rho^{\ast} \cdot Re_i-c_i \geq 0 = u(b_i, b_{-i})$. \\
\textbf{Case 3: worker $i$ wins with $b_i$ but loses with $c_i$}. This means $\frac{b_i}{Re_i} \leq \frac{c_i}{Re_i}$, which contradicts $b_i> c_i$. Thus it is impossible. \\
\textbf{Case 4: worker $i$ loses with both $c_i$ and $b_i$}. This means $ u (c_i, b_ {-i}) = u (b_i, b_ {-i}) = 0 $. 

Next, discuss the scenario where $b_i <c_i$. \\
\textbf{Case 1: worker $i$ wins with both $b_i$ and $c_i$}. The winning workers set remains unchanged, resulting in the same payment density. Since $re_i$ is the same, $u(c_i, b_{-i}) = u(b_i, b_{-i})$. \\
\textbf{Case 2: worker $i$ wins with $c_i$ but loses with $b_i$}. This means $\frac{c_i}{Re_i} \leq \frac{b_i}{Re_i}$, which contradicts $b_i <c_i$. Thus it is impossible.\\
\textbf{Case 3: worker $i$ wins with $b_i$ but loses with $c_i$}. When submitting $c_i$, $k$ workers win, not including worker $i$, so $u(c_i, b_{-i}) = 0$ and $\frac{b_{k+1}}{ Re_{k+1}} \leq \frac{c_i}{Re_i}$. When submitting $b_i$, $k^{\prime}$ workers win, including worker $i$, so $\frac{b_i}{Re_i} \leq \frac{B}{Re_i + \sum_{j \in S}Re_j}$ and $\frac{b_i}{Re_i} \leq \frac{b_{k^{\prime}+1} }{Re_{k^{\prime}+1}}$. Compared with $c_i$, when submitting $b_i$, $k^{\prime} \leq k$. Thus $\frac{b_i}{Re_i} \leq \frac{b_{k^{\prime}+1}}{Re_{k^{\prime}+1}} \leq \frac{b_{k+1 }}{Re_{k+1}} \leq \frac{c_i}{Re_i}$. When submitting $b_i$, utility of worker $i$ is $u(b_i, b_{-i}) = p_i-c_i \leq \rho^{*} \cdot Re_i-c_i \leq \frac{b_{k^{ \prime}+1}}{Re_{k^{\prime}+1}} Re_i-c_i \leq \frac{c_i}{Re_i} Re_i-c_i = 0 = u(c_i, b_{-i})$.\\
\textbf{Case 4: worker $i$ loses with both $c_i$ and $b_i$}. This means $ u (c_i, b_ {-i}) = u (b_i, b_ {-i}) = 0$.
\end{proof}

\section{Experiment}
\label{sec:experiment}
We conduct experiments using the MNIST dataset with a fully connected model and the FashionMNIST dataset with the LeNet model. The publisher has a validation set and a test set, both of size 5000. Each worker has a training set of size 1000, which is randomly sampled from the corresponding dataset. By modifying the label to another, the data accuracy of each worker may be different. The winning worker will train the local model for 1 epoch with a learning rate of 0.05 and a batch size of 128.
\begin{figure}[tb]
  \centering
  \centerline{\includegraphics[width=\linewidth]{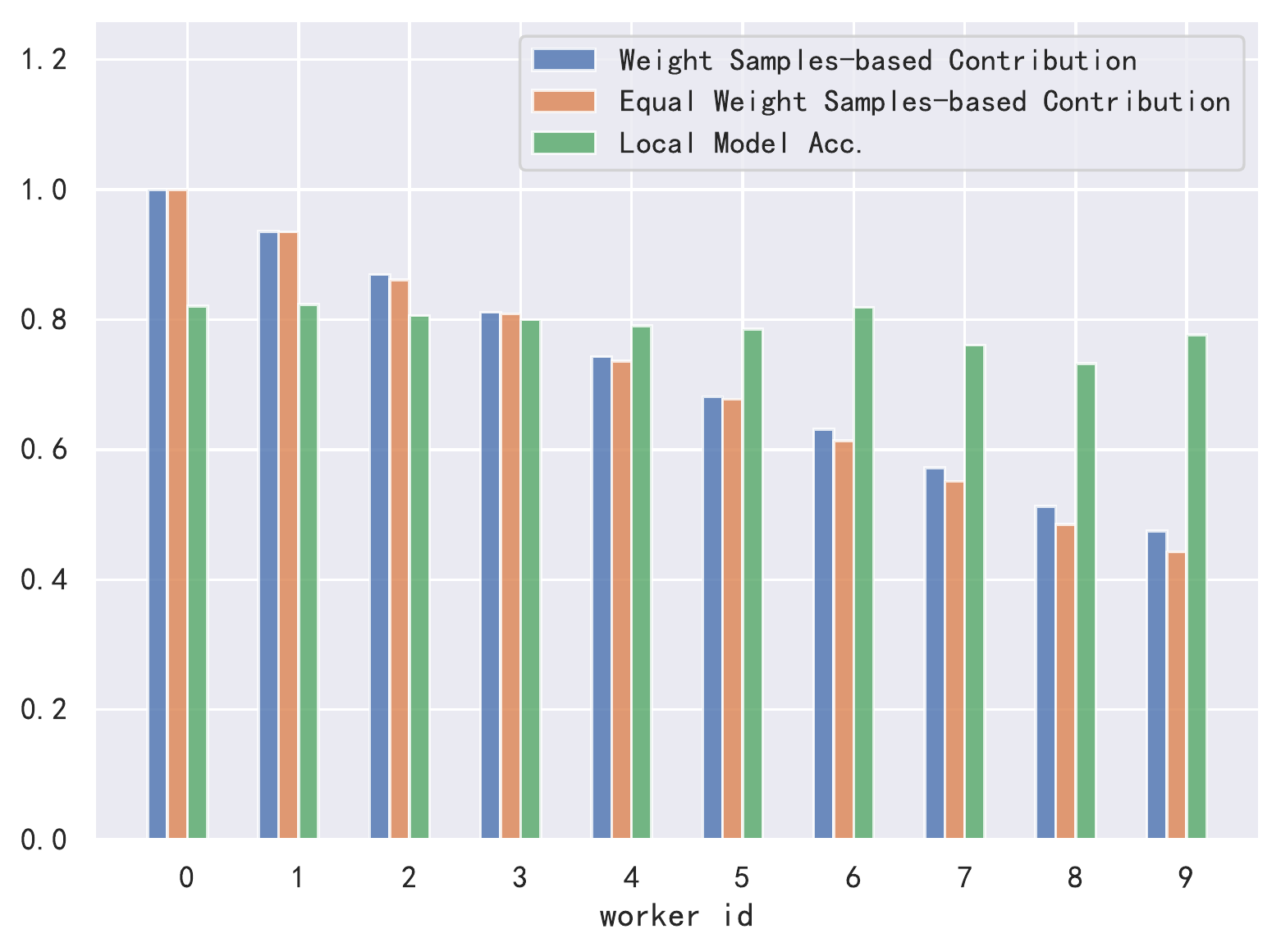}}
  \caption{Contribution measurement results in Case 1}
  \label{fig:contribution_case1}
\end{figure}

\begin{figure}[tb]
  \centering
  \centerline{\includegraphics[width=\linewidth]{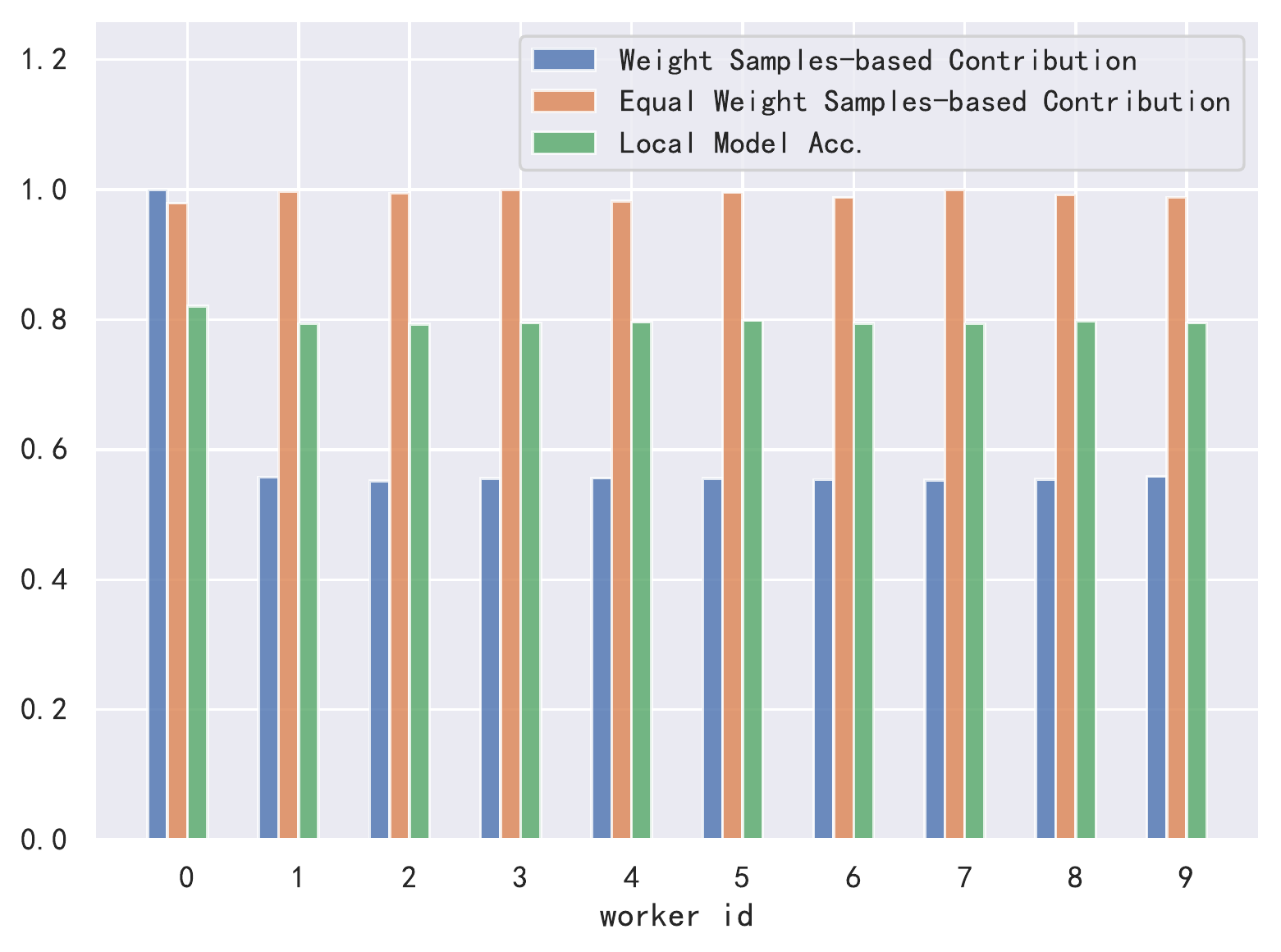}}
  \caption{Contribution measurement results in Case 2}
  \label{fig:contribution_case2}
\end{figure}
We use the equal-weighted sample-based method as a benchmark which is the same as our weighted sample-based method, but the weights of the samples are equal. We compare our contribution measure with this benchmark using the MNIST dataset in two cases. The first case is that the data of 10 workers is iid, whose accuracy decreases sequentially by  0.1 from 1.0 to 0.1. The second case is that the data of 10 workers is non-iid and the accuracy is 1.0, in which worker 0 has data of all labels and the others lack data of a certain label. Figure~\ref{fig:contribution_case1} shows the results for Case 1, where the contribution decreases as the data accuracy decreases in both ours and the benchmark. However, the accuracy of the local model is similar, which cannot reflect the data quality. Figure~\ref{fig:contribution_case2} shows the results for Case 2, where our method can highlight the contribution of higher-quality worker 0 compared to the benchmark and local model accuracy. These results demonstrate that our method can effectively and fairly measure contribution.

We compare our mechanism with the following benchmarks. The first is RRAFL. The second is Vanilla FL, where workers are randomly selected until the budget is exhausted. The third is the Proportional Share mechanism. The fourth is Bid Greedy, where workers with low bids are preferentially selected until the budget is exhausted. The fifth is Reputation Greedy, preferring high-reputation workers until the budget runs out. The sixth is an Approximate Optimal approach which has full knowledge about all workers’ information. 
\begin{figure}[tb]
  \centering
  \centerline{\includegraphics[width=\linewidth]{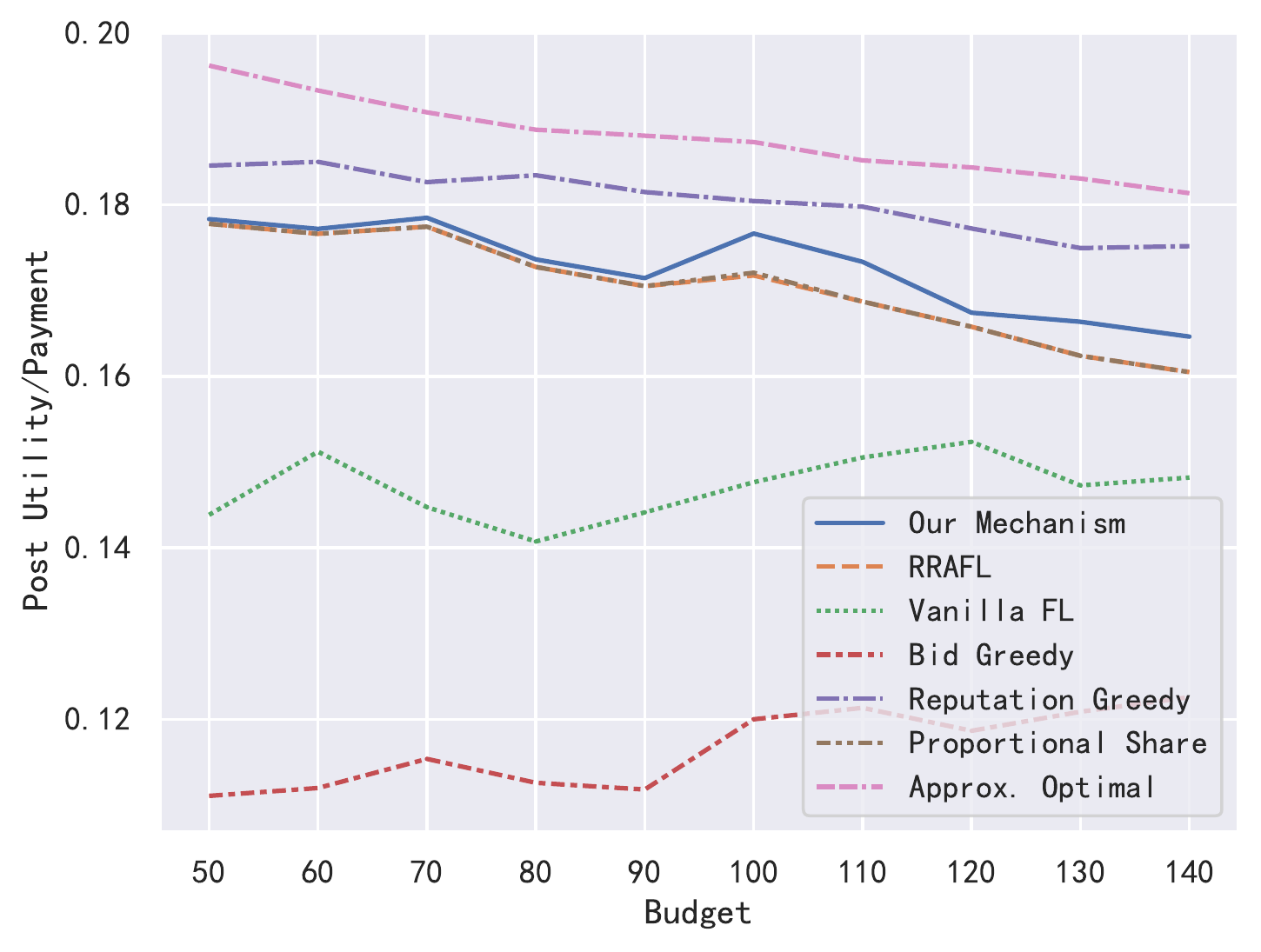}}
  \caption{Impact of budget on the ex-post utility of unit payments}
  \label{fig:budget_compare}
\end{figure}
\begin{figure}[tb]
  \centering
  \centerline{\includegraphics[width=\linewidth]{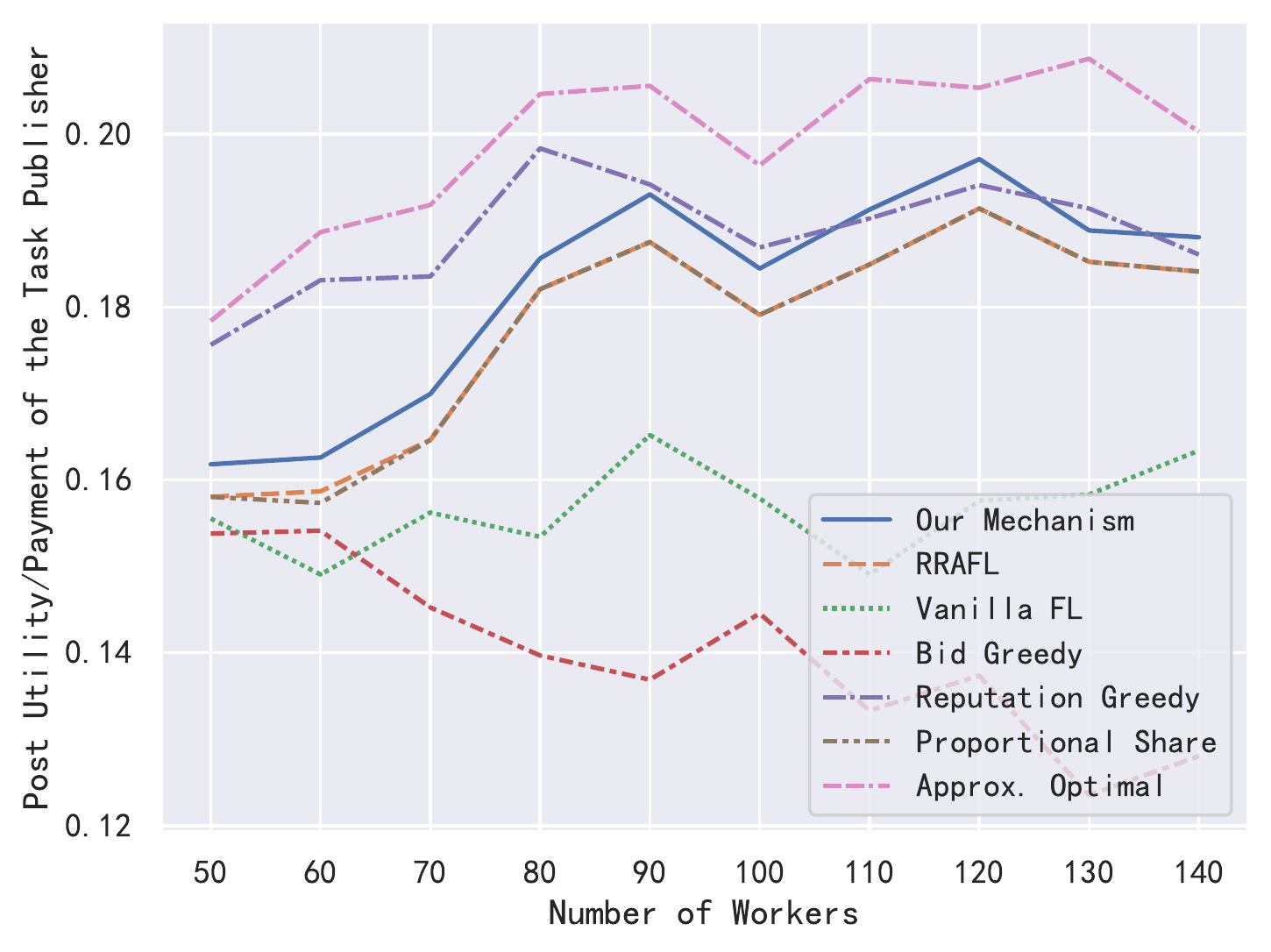}}
  \caption{Impact of the number of workers on the ex-post utility of unit payments}
  \label{fig:worker_cnt_compare}
\end{figure}

We first set up 100 workers whose $Re_i$, $b_i$ and $re_i$ are uniformly generated from $[0.1, 1]$, $[\frac{10}{3}Re_i+\frac{2}{3}, \frac{10}{3}Re_i+\frac{8}{3}]$ and $[\max(0, Re_i- 0.1), \min(1, Re_i+0.1)]$, respectively. Figure~\ref{fig:budget_compare} implies the impact on the ex-post utility of unit payments as the budget increases. When the budget is 125, Figure~\ref{fig:worker_cnt_compare} shows the effect of the number of workers. The Approx. Optimal and the Reputation Greedy mechanism well perform because full knowledge of all workers are known a priori or prefer high reputation but do not guarantee truthfulness. Our mechanism sacrifices utility to achieve truthfulness while outperforming other benchmarks. These mean that our mechanism can help to get as much utility as possible in unit payment. 

\begin{table}[tb]
  \centering
  \begin{tabular}{cccccc}
  \toprule
  \textbf{Dataset} & \textbf{Data Acc.} & \textbf{0.1} & \textbf{0.4} & \textbf{0.7} & \textbf{1.0}  \\
  \toprule
  \multirow{3}{*}{MNIST} & \textbf{Contribution} & 0.430 & 0.626 & 0.824 & 0.988  \\
  & \textbf{Reputation} & 0 & 0.421 & 0.556 & 0.985  \\
  & \textbf{Payment} & 0 & 2.203 &  2.663 & 5.533  \\
  \midrule
  \multirow{3}{*}{\shortstack{Fashion\\MNIST}} & \textbf{Contribution} & 0.734 & 0.785 & 0.867 & 0.929  \\
  & \textbf{Reputation} & 0.291 & 0.549 & 0.666 & 0.905  \\
  & \textbf{Reward} & 0.999 & 2.539 & 3.235 & 5.527  \\
  \bottomrule
  \end{tabular}
  \caption{Contribution, reputation and payment for different data acc.}
  \label{tab:contribution_reward_reputation}
\end{table}

Then we set up 30 workers whose data is iid. The data accuracies $dacc$ of 15, 5, 5, and 5 workers are 1.0, 0.7, 0.4, and 0.1, respectively. $b_i$ is randomly generated from $[\frac{10}{3}dacc+\frac{2}{3}, \frac{10}{3}dacc+\frac{8}{3}]$. Run 50 tasks with 10 global iterations for the MNIST and Fashion MNIST dataset, respectively, using the last 45 tasks to evaluate the mechanism. Table~\ref{tab:contribution_reward_reputation} shows the average contribution, reputation, and reward of workers with different data accuracies in 50 tasks. We can observe that workers with higher data accuracy can obtain higher contributions, reputations, and rewards. This demonstrates that our method can effectively measure contributions and establish appropriate reputations.

\begin{table}[tb]
  \centering
  \begin{tabular}{ccc}
  \toprule
  \textbf{Mechanism} & \textbf{MNIST} & \textbf{Fashion MNIST}  \\
  \toprule
  \textbf{Our Mechanism}    & \textbf{1.000} & \textbf{1.000}  \\
  \textbf{RRAFL}            & 1.000 &  1.000 \\
  \textbf{Vanilla FL}       & 0.4903 & 0.4879 \\
  \textbf{Bid Greedy}       &  0.2830 & 0.3134  \\
  \textbf{Reputation Greedy} & 0.9512 & 0.9602  \\
  \bottomrule
  \end{tabular}
  
  \caption{Proportion of workers with a data accuracy of 1.0 among the selected workers}
  \label{tab:proportion_acc1}
\end{table}

Table~\ref{tab:proportion_acc1} shows that in the last 45 tasks, the workers selected through ours and the RRAFL mechanism are all workers with a data accuracy of 1.0. In our mechanism and RRAFL, the proportion of workers with a data accuracy of 1.0 is higher than in the others, especially Vanilla FL and Bid Greedy. Our mechanism helps task publishers select high-quality workers.

\begin{table}[htbp]
  \centering
  \begin{tabular}{ccc}
  \toprule
  \textbf{Mechanism} & \textbf{MNIST} & \textbf{Fashion MNIST}  \\
  \toprule
  \textbf{Our Mechanism}    & \textbf{0.3747} & \textbf{0.8924}  \\
  \textbf{RRAFL}            & 0.3749 &  0.8944 \\
  \textbf{Vanilla FL}       & 0.4409 & 1.0082 \\
  \textbf{Bid Greedy}       & 0.4977 & 1.1585  \\
  \textbf{Reputation Greedy} & 0.3769 & 0.8942  \\
  \bottomrule
  \end{tabular} 
  
  \caption{Average loss of global models with different mechanisms}
  \label{tab:loss}
\end{table}

Table~\ref{tab:loss} shows that the average loss of the global model obtained by our mechanism is the smallest and much lower than that of Vanilla FL and Bid Greedy, meaning that our mechanism outperforms others and can improve the model.

\section{Conclusion}
We designed an auction-based ex-post-payment federated learning incentive mechanism with reputation and contribution measurement. First, we propose a fair contribution measurement method. Second, we establish a reputation system. Third, combining reputation and reverse auction, we design a mechanism for selecting and paying workers. Finally, theoretical analysis shows the effectiveness of our mechanism and experiments show the effectiveness of our mechanism.

\bibliographystyle{named}
\bibliography{ijcai22}

\end{document}